\newcommand{\TT}{\mathcal T}
\def\eqref#1{equation~\ref{#1}}
\def\1{\bm{1}}
\DeclareMathAlphabet{\mathsfit}{\encodingdefault}{\sfdefault}{m}{sl}
\SetMathAlphabet{\mathsfit}{bold}{\encodingdefault}{\sfdefault}{bx}{n}
\newcommand{\R}{\mathbb{R}}
\newcommand{\OO}{\mathcal O}
\DeclarePairedDelimiterX{\dotp}[2]{\langle}{\rangle}{#1, #2}
\newcommand{\cb}[1]{{\color{red} (Cenk: #1)}}
\renewcommand{\cb}[1]{}
\newcommand{\embdim}{d_{\mathrm{emb}}}
\newcommand{\tablesize}{n}
\newcommand{\functions}{\mathcal{F}}
\newcommand{\lookup}{q}
\newcommand{\memtable}{T}
\newcommand{\cX}{\mathcal{X}}
\newcommand{\id}{\mathrm{id}}
\newcommand{\norm}[1]{\lVert #1 \rVert}
\newcommand{\N}{\mathbb{N}}
\newcommand{\experts}{\mathcal{T}}
\begin{document}

%

%
\runningauthor{The Power of External Memory in Increasing Predictive Model Capacity}

\twocolumn[

\aistatstitle{The Power of External Memory in Increasing Predictive Model Capacity}

\aistatsauthor{ Cenk Baykal \\
                Google Research \And 
                Dylan J Cutler \\
                Google Research \And  
                Nishanth Dikkala \\
                Google Research \AND 
                Nikhil Ghosh \\
                University of California, Berkeley \And 
                Rina Panigrahy \\
                Google Research 
                \And Xin Wang \\
                Google Research }



]

\begin{abstract}

One way of introducing sparsity into deep networks is by attaching an external table of parameters that is sparsely looked up at different layers of the network. By storing the bulk of the parameters in the external table, one can increase the capacity of the model without necessarily increasing the inference time. Two crucial questions in this setting are then: what is the lookup function for accessing the table and how are the contents of the table consumed? Prominent methods for accessing the table include 1) using words/wordpieces token-ids as table indices, 2) LSH hashing the token vector in each layer into a table of buckets, and 3) learnable softmax style routing to a table entry. The ways to consume the contents include adding/concatenating to input representation, and using the contents as expert networks that specialize to different inputs. In this work, we conduct rigorous experimental evaluations of existing ideas and their combinations. We also introduce a new method, alternating updates, that enables access to an increased token dimension without increasing the computation time, and demonstrate its effectiveness in language modeling.
\end{abstract}
\section{Introduction}
\label{sec:introduction}

Contemporary machine learning models have been remarkably successful in many different domains ranging from natural language~\cite{chowdhery2022palm,hoffmann2022training} to computer vision~\cite{yu2022coca,riquelme2021scaling}. However, these successes have come in part through sheer scale. A vast amount of empirical studies justify the conventional wisdom that bigger (models and data sets) is better~\cite{hernandez2021scaling,kaplan2020scaling}. Accordingly, state-of-the-art models often contain billions of parameters and are trained for weeks on enormously large data sets using thousands of AI accelerators. Their immense size leads to prohibitive compute and energy costs~\cite{patterson2021carbon} and prevents their deployment to resource or compute-constrained applications (e.g., autonomous driving)~\cite{liebenwein2021lost}.

\emph{Sparsely-activated networks}, such as Mixture-of-Expert (MoE) models~\cite{shazeer2017outrageously}, have the potential to alleviate these costs and enable efficient scalability of modern models. The main idea is to partition a network's or each layer's parameters into a table (of experts), where each entry (expert) of the table corresponds to a small subset of disjoint parameters that can be acted on by the input. During training and inference, a given input to the network is \emph{routed} to a small subset of entries (parameters) to compute the output. As a result, the computation cost remains small relative to the total number of network parameters. By storing the bulk of the parameters in externally accessed tables, we obtain models with significantly higher capacity with only a relatively small increase in computation time.

Designing effective sparsely-activated models hinges on two essential components: (i) the expert lookup (routing) function and (ii) the logic for consuming the contents of the table entries. Examples of lookup functions include using Token-ID lookups similar to~\cite{roller2021hash}, Locality Sensitive Hashing (LSH) lookup of input token embeddings~\cite{panigrahy2021sketch}, and trainable softmax based lookup as in sparse expert models~\cite{fedus2022review,lepikhin2020gshard}. There are also different ways of consuming the accessed entries. For example, one could view the accessed entries as additional parameters for input representation, which can be added or concatenated with the layer's output to form the augmented output. Alternatively, one could interpret the table entry as input-dependent function parameters, which parameterize an \emph{expert} function whose output is combined with the \emph{main} expert output;  here, the \emph{main} expert is one that the input is always acted upon (see Fig.~\ref{fig:moe}). Overall, there exists a research gap in evaluating and comparing combinations of such ideas to generate the most efficient and high-performing sparse models.



\begin{figure}[ht]
  \centering
  \begin{subfigure}[b]{0.35\linewidth}
    \centering
    \includegraphics[width=\textwidth]{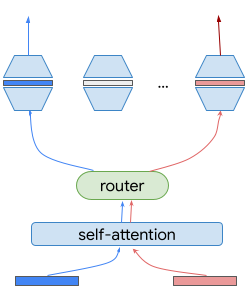}
    \caption{Mixture of Experts }
  \end{subfigure}
  \begin{subfigure}[b]{0.62\linewidth}
    \centering
    \includegraphics[width=\textwidth]{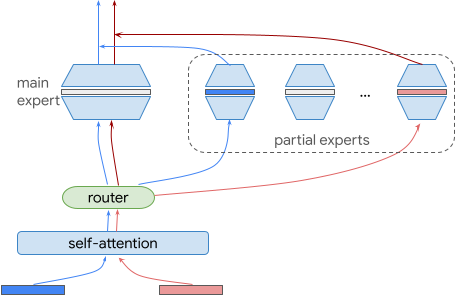}
    \caption{Mixture of Partial Experts}
  \end{subfigure}
  \caption{The standard Mixture of Experts model (left) routes the inputs to one or more of $n$ experts based on a routing function. Mixture of Partial Experts (right) always routes the input to the main expert and additionally routes the input to one or more partial experts; the output is a function of the main expert's and partial experts' outputs.}
   \label{fig:moe}
\end{figure}
In this work, we extensively evaluate several choices for the lookup function and the consumption logic and their combinations. We find Token-ID lookup to be more effective in the large-number-of-experts case.  From a theoretical perspective, we provide insights into popular lookup (routing) functions and why some might perform better than others. In addition, inspired by the observation that transformer models \cite{vaswani2017attention} benefit from increased representation dimension, we use memory parameters as additional parameters for input representation, and introduce a novel method called \emph{Alternating Updates}. This method widens the representation without increasing transformer computation time by working on a part of the representation at each layer. In particular, our contributions are:
\begin{enumerate}
    \item  We extensively study and empirically evaluate various lookup functions including softmax, LSH, and Token-ID lookup in the partial experts setting.
    
    \item We show the theoretical connections between various lookup functions and LSH variants. We also establish the power of consuming disjoint embedding tables at latter layers of the network.
    
    
    \item We introduce and evaluate the method of \emph{Alternating Updates} that enables increased token dimension with little additional computation cost.

\end{enumerate}

\section{Related Work}
\label{sec:related-work}

Prior work is rich with a diverse set of techniques to increase the efficiency of contemporary transformer models (see~\cite{tay2020efficient} for a survey). In this paper, we focus on lookup-based sparse models due to their state-of-the-art performance on various standard benchmarks~\cite{fedus2022review} and favorable theoretical properties~\cite{chen2022towards,baykal2022theoretical}.

Recent works have introduced extremely large, yet scalable models with the use of conditional routing of inputs to a learnable subset of parameters. Notably, the Sparse Mixture of Experts (SMoE)~\cite{shazeer2017outrageously,yuksel2012twenty,jacobs1991adaptive} family of models use a learned softmax probability distribution to conditionally direct the computation to \emph{experts}, i.e., subsets of network parameters. By routing the computation to a small subset of parameters on an input-dependent basis, SMoE leads to higher capacity models with a relatively small and controllable increase in computation. Switch Transformers~\cite{fedus2021switch} show that routing to a single expert on an input-dependent basis reduces computation and outperforms prior SMoE approaches on language tasks. 

Follow up work on SMoE include those that improve the load balancing of experts~\cite{zoph2022designing,lewis2021base}, use reinforcement learning to learn the routing function~\cite{clark2022unified}, and leverage smooth top-$k$ expert selection~\cite{hazimeh2021dselect}
(see~\cite{fedus2022review} for a survey). Other choices for the routing function include non-learnable ones such as Locality Sensitivity Hashing (LSH)~\cite{panigrahy2021sketch} which generally maps similar inputs to the same expert, Hash Layers that use token-based hashing~\cite{roller2021hash}, and language-specific deterministic routing~\cite{fan2021beyond}. Residual Mixture of Experts~\cite{wu2022residual} separates the expert weights into input-independent and input-dependent components, similar to the partial expert setting we have.

Conditionally accessing \emph{external memory} is another related approach to vastly increase model capacity at the cost of a relatively small increase in computation~\cite{graves2016hybrid,graves2014neural}. For examples, Memorizing Transformers~\cite{wu2022memorizing}, Memformer~\cite{wu2020memformer}, and Product key memory~\cite{lample2019large} leverage dynamic memory to encode and retrieve relevant information. Additional works include those that use an immensely large untrainable corpus, such as Wikipedia, REALM~\cite{guu2020retrieval}, or a 2 trillion token database, RETRO~\cite{borgeaud2022improving}.





\section{Lookup Functions}
\label{sec:routing-functions}

In this section, we formalize the augmentation of a layer $L$ with external memory and provide an overview of the various lookup functions that we cover in this paper. The memory augmented layer is shown as Alg.~\ref{alg:mal}. Since we are primarily interested in Transformer-like architectures we can just focus on the action of the layer on a single token. Abstractly, we consider a layer $L$ to be a function from a vector space $\cX$ to itself. For example, $L$ can be the self-attention layer of a transformer and $\cX$ the space of embedded token vectors $\R^{\embdim}$. We let $\functions$ be the set of all functions from $\cX$ to $\cX$. The external memory for the layer consists of a lookup function $\lookup$ and a memory table $\memtable$.

Given the previous layer output $x \in \cX$ and the index of original token in the vocabulary (i.e.\ the token-id, see Sec.~\ref{sec:token-id-routing}) which we denote $\id$, the look-up function $\lookup$ computes a set of indices $\experts = \lookup(x, \id)$. Typically the look-up function either uses only $\id$ as in Token-ID lookup (see Sec.~\ref{sec:token-id-routing}) or only $x$ as in Softmax lookup (see Sec.~\ref{sec:softmax-routing}). The indices $i \in \experts$ are then used to access the table $\memtable$ and access experts $f_i = \memtable(i)$ in $\functions$. 

We will consider experts $f_i$ that are either two-layer fully connected networks $f(x) = V\phi(U^Tx)$ where $\phi$ is the ReLU activation or just simply a constant function $f(x) = b$. For a $d$-dimensional input, the matrices $V,U \in \mathbb{R}^{d \times \mathrm{rank}}$, where  $\mathrm{rank}$ is a configurable parameter that specifies the width of the expert, and consequently the computation time of routing to each partial expert. The augmented layer computation outputs $L(x) + \sum_{i \in \experts} w_i(x) f_i(x)$ for some weighting functions $w_i$ instead of the normal output $L(x)$. In principle, one could consider alternate ways of combing $L(x)$ and $\sum_{i} w_i(x) f_i(x)$, however in this work we only consider addition since it is simple and preserves dimensions. We consider different choices of the lookup function $\lookup$ and memory tables $\memtable$ as follows.
\begin{algorithm}
\label{alg:augment-memory}
\SetAlgoLined
\KwIn{Layer $L \in \functions$, previous layer output $x \in \cX$, $\id \in \N$, look-up function $\lookup : \cX \times \N \to [\tablesize]$, memory table $\memtable : [\tablesize] \to \functions$
}
Table index $i = \lookup(x, \id)$\;
Adjustment function $f = \memtable(i)$\;
\KwOut{$L(x) + f(x)$}
\caption{Memory Augmented Layer}\label{alg:mal}
\end{algorithm}

\subsection{MoE-style Softmax Lookup}\label{sec:softmax-routing}
The MoE layer routes an input token $x$ to $k$ of $n$ experts where each expert is itself a parametrized subnetwork (e.g., a fully-connected layer). Following~\cite{fedus2022review}, we let $\{E_i(\cdot)\}_{i \in [n]}$ and $E_i(x)$ denote the set of experts and the output of lookup the input token $x$ to expert $i$, respectively. For an input token $x$, a learnable weight matrix $W$ is applied to obtain the logits $h(x) = W x$. The lookup probabilities are computed by taking the softmax of $h(x)$
$$
p_i(x) = \frac{\exp(h_i(x))}{\sum_{j \in [n]} \exp(h_j(x))} \quad \forall{i \in [n]}.
$$
The token $x$ is routed to the expert(s) $\TT \subset [n]$ with the top-$k$ probabilities $p(x)$. Since this operation is not differentiable, the output $y$ is computed as a probability weighted combination of the experts' outputs to enable gradients to propagate back to the router parameters, i.e.,
$
y = \sum_{i \in \TT} p_i(x) E_i(x)
$
~\cite{shazeer2017outrageously}.

\subsection{Token-ID Lookup}\label{sec:token-id-routing}
In a transformer, the computation can be viewed as repeatedly transforming the embedding vector of a token within the initial embedding space. For example the token ``tiger" may be embedded initially as $x_0 \in \R^{\embdim}$ and then transformed into $x_1, x_2, \ldots$, etc.\ successively where each $x_i \in \R^{\embdim}$. In Token-ID lookup for each $x_i$ the look-up function $\lookup$ (see Alg.~\ref{alg:mal}) simply returns the index of the input token (e.g.\ ``tiger") in the vocabulary and ignores the previous layer output. Note that in this case the table size $\tablesize$ of each layer is equal to the size of the vocabulary.

\subsection{Locality Sensitive Hashing (LSH) Lookup}
Locality Sensitive Hashing~\cite{gionis1999similarity} is a popular variant of hashing that tends to hash similar objects to the same buckets and is used for approximate nearest neighbor search~\cite{andoni2014beyond, andoni2015optimal, andoni2015practical}. It tends to hash similar inputs to the same bucket with higher probability and dissimilar inputs to different buckets. There are variants of LSH, but for LSH lookup we follow prior work~\cite{baykal2022theoretical,panigrahy2021sketch} and consider the \emph{hyperplane-based LSH}. At a high level, this approach partitions the input space into a grid-like structure using randomly oriented, equispaced hyperplanes. Each such region is considered a bucket (expert) of the hash table. See Sec.~\ref{sec:analysis} and the supplementary material for details.

\section{Theoretical Arguments}
\label{sec:analysis}
In this section, we analyze and provide unifying theoretical insights into popular lookup functions (Sec.~\ref{sec:analysis-lookups}) and demonstrate the theoretical advantage of using embedding table lookups at higher layers (Sec.~\ref{sec:analysis-embedding-lookups}).

\subsection{Lookup functions as variants of LSH and their efficiency}
\label{sec:analysis-lookups}
Here, we show that under simplifying assumptions, softmax routing and wordpiece routing can be viewed as Spherical LSH and Min-hash LSH, respectively. This interpretation will imply that, with practical configurations, Token-ID is more parameter efficient than Softmax, which is more efficient than hyperplane-LSH lookup.

{\bf Preliminaries}  We consider a Locality Sensitive Hashing (LSH) that maps an input to one of $n$ buckets. We let $r_2 > r_1 > 0$, denote the threshold for nearby points and far-away points, respectively. For $x, y \in \mathbb{R}^d$, we say $x$ and $y$ are nearby if $
\norm{x - y}_2 \leq r_1$ and they are far-away if $\norm{x - y}_2 \geq r_2$, where $\norm{x}_2$ is the $2$-norm of the vector $x$. Let $c = r_2/r_1 > 1$ denote the distance gap as a ratio. Let 
\begin{align*}
    p_1 &\le \Pr(h(x) = h(y): \norm{x - y}_2 \leq r_1)\\
    p_2 &\ge \Pr(h(x) = h(y): \norm{x - y}_2 \geq r_2)
\end{align*}
 denote lower and upper bounds on the collision probability of nearby points and far-away points, respectively. Notably, with $n$ buckets the probability that two nearby points hash to the same bucket is $n^{-\rho}$, where $\rho = \frac{\log (1/p_1)}{\log (1/p_2)}$~\cite{andoni2015optimal}.

{\bf Efficiency} Let us consider two sentences $s_1, s_2$ of the same length $l$ that have $f$ fraction of wordpieces in common. Assume for simplicity that the embedding vector for each wordpiece is a random unit vector in $R^d$. We summarize LSH variants and the collision probability of nearby points in this setting. We are interested in the \emph{efficiency}, i.e., the collision probability for the set of experts corresponding to two similar sentences. For a fixed size table of $n$ experts, the higher the collision probability for two similar sentences, the more efficient the LSH lookup is in terms of routing similar tokens to similar buckets. The full details of the LSH variants and proofs are in the supplementary.
\begin{enumerate}
    \item \textbf{Hyperplane LSH~\cite{datar2004locality}}: this variant divides a $R^d$ space into buckets by using randomly oriented parallel equispaced hyperplanes. Hyperplane LSH has the property that $\rho = \OO(1/c)$. Computations (see supplementary) yield $c = 1/\sqrt{1- f}$, which implies a collision probability of $n^{-\OO(\sqrt{1- f})}$.
    
    
    
    \item \textbf{Spherical LSH \cite{andoni2008near}}: here, we use a random set of points to divide up the $R^d$ space into Voronoi regions, each representing a different bucket. This method has a better $\rho$ value of $\OO(1/c^2)$. Assuming that the Softmax lookup matrix $W$ (see Sec.~\ref{sec:routing-functions}) is uniform, the Softmax lookup corresponds to Spherical LSH~\cite{andoni2015practical}. This yields $\rho = \OO(1- f)$ and a collision probability of $n^{-\OO(1- f)}$.
    
    \item \textbf{Min-hash~\cite{broder1998min}}: this approach is used for hashing sets so that similar sets get hashed to the same bucket. Using the Jaccard similarity measure for comparing sets means that the fraction of experts that match up for the two sentences $s_1,s_2$ is $f$. This also means that Token-ID lookup can be viewed as Min-hash LSH.
    
    
\end{enumerate}

The above properties imply the following theorem.

\begin{restatable}{theorem}{lookupefficiency}
\label{thm:lookup-efficiency}
In the setting of the above simplifying assumptions, we have the following:
\begin{enumerate}
    \item Softmax and Token-ID lookup can be viewed as Spherical LSH and Min-hash LSH, respectively.
    \item The probability that a random token in the two sentences of equal length that overlap in $f$ fraction of the wordpieces gets routed to the same expert is $n^{-\OO\left(\sqrt{1- f}\right)}$, $n^{-\OO\left(1- f\right)}$, and $f$ for hyperplane LSH, Softmax, and Token-ID lookup, respectively.
    \item For large $n$ and a small fraction $f$, in terms of routing to the same expert, the efficacy order of the different lookup methods is $\text{Token-ID} \ge \text{Softmax} \ge \text{hyperlane LSH}$. 
\end{enumerate}
\end{restatable}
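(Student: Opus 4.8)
The plan is to read the theorem as a consolidation of the three facts collected in the enumerated list immediately above it, together with the preliminary identity that an LSH family over $n$ buckets with collision bounds $p_1,p_2$ routes a pair of nearby points to the same bucket with probability $n^{-\rho}$, $\rho = \log(1/p_1)/\log(1/p_2)$. I would prove the three claims in order, with the bulk of the genuinely quantitative work pushed to the supplement.

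For claim~1, the first correspondence is obtained by unfolding definitions. Writing the Softmax logits as $h(x) = Wx$, top-$1$ routing sends $x$ to $\argmax_{i \in [n]} \langle W_i, x\rangle$, where $W_i$ is the $i$th row of $W$; under the stated assumption that $W$ is uniform (rows independent uniform directions) this is exactly the rule ``send $x$ to the nearest of $n$ i.i.d.\ random points on the sphere,'' i.e.\ the Voronoi partition that defines Spherical LSH in \cite{andoni2008near,andoni2015practical}. Hence Softmax $=$ Spherical LSH in this regime. For Token-ID, the bucket of a token is its wordpiece identity, so the set of experts a sentence touches is its set of wordpieces; two sentences sharing an $f$-fraction of wordpieces therefore agree on exactly an $f$-fraction of their experts, which is the defining behaviour of Min-hash under Jaccard similarity.

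For claim~2 I would first establish the distance gap $c = r_2/r_1 = 1/\sqrt{1-f}$ in the random-unit-vector embedding model: using near-orthogonality of independent unit vectors in high dimension, the squared distance between the aggregated representations of the two $f$-overlapping sentences is proportional to the number $\approx 2(1-f)l$ of non-shared wordpieces, whereas the distance to a generic (disjoint) sentence is proportional to $2l$, so the ratio is $1/\sqrt{1-f}$ (the full computation is deferred to the supplement). Treating the per-token representation seen by a geometric router as, to first order, this sentence-level aggregate, the collision probability of the relevant near-pair is $n^{-\rho}$, and substituting $c$ into the $\rho$-bounds recalled in the list gives the three figures at once: hyperplane LSH has $\rho = \OO(1/c) = \OO(\sqrt{1-f})$, hence collision probability $n^{-\OO(\sqrt{1-f})}$; Softmax, being Spherical LSH, has $\rho = \OO(1/c^2) = \OO(1-f)$, hence $n^{-\OO(1-f)}$; while Token-ID, being Min-hash, ignores the vector entirely and collides a uniformly sampled token exactly when the sampled wordpiece is shared, which happens with probability $f$.

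For claim~3 it remains to compare $f$, $n^{-\OO(1-f)}$ and $n^{-\OO(\sqrt{1-f})}$ for large $n$ and small $f$. Since $0 \le 1-f \le 1$ we have $1-f \le \sqrt{1-f}$, so $n^{-(1-f)} \ge n^{-\sqrt{1-f}}$, giving Softmax $\ge$ hyperplane LSH; and since $f$ is a fixed positive constant while $n^{-\OO(1-f)} \to 0$ as $n\to\infty$ (quantitatively, once $n^{1-f} \ge 1/f$ we get $f \ge n^{-(1-f)}$), we obtain Token-ID $\ge$ Softmax in the large-$n$ regime, and chaining yields the stated ordering. The main thing to be careful about — and the only real obstacle beyond bookkeeping — is twofold: the $\OO(\cdot)$ hidden in the exponents means the Softmax-vs-hyperplane comparison is clean only up to constants, so claim~3 must be read as the asymptotic ranking the statement already advertises (``for large $n$ and a small fraction $f$''); and the one place where genuine work happens is the high-dimensional concentration argument behind $c = 1/\sqrt{1-f}$, whose details I would relegate to the supplementary material.
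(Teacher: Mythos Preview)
Your proposal is correct and follows essentially the same approach as the paper's own proof: identify Softmax with Spherical LSH via the top-$1$ argmax rule under a random routing matrix, identify Token-ID with Min-hash via Jaccard similarity of wordpiece sets, compute $c = 1/\sqrt{1-f}$ from the distance between mixed (averaged) token embeddings under the random-unit-vector assumption, plug into the known $\rho$ values, and compare the resulting exponents. If anything, your version is slightly more explicit than the paper's in claim~3 (you spell out why $1-f \le \sqrt{1-f}$ gives the Softmax vs.\ hyperplane ordering and why the fixed $f$ beats $n^{-\OO(1-f)}$ asymptotically), and your caveat about the $\OO(\cdot)$ constants is well-placed; the paper simply writes the chain of inequalities without that qualification.
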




\subsection{Advantage of embedding lookups at higher layers}
\label{sec:analysis-embedding-lookups}

Note that the Token-ID routing need not be a per-layer routing operation since it is merely a function of the token ID which can be done once in the input layer. One way of incorporating the result of this lookup is to simply feed the output of the lookup into the input for the first layer. This implementation delegates the work of passing on this information to the higher layers to the network itself. Alternatively, the output of the lookup in the input layer can be partitioned so that \emph{different parts of the lookup output feed into the different layers of the network}. This partitioning and feeding the embedding lookup directly to the layers can be viewed as separate embedding lookups in those layers. 

The theorem below establishes that the latter implementation with embedding lookups at higher layers enables a more efficient architecture of lower width and fewer parameters than the former one.


\begin{restatable}{theorem}{embeddinglookup}
\label{thm:embedding-lookup}
There exists a class of natural learning problems where embedding lookups of categorical features at upper layers in addition to the input layer gives a more efficient architecture compared to an architecture that feeds the embedding lookup output only to the input layer.
\end{restatable}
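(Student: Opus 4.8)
The plan is to exhibit a concrete family of ``staged'' prediction problems that is tailor-made so that per-layer embedding lookups are cheap while input-layer-only lookups are expensive. Fix a depth $L$, an embedding dimension $d$, and a large vocabulary $V$. An instance consists of a continuous seed $x_0 \in \R^d$ and categorical features $c_1,\dots,c_L \in [V]$, and the embedding table maps $c_j$ to an essentially arbitrary vector $v_j = e^{(j)}(c_j) \in \R^d$ (so the $v_j$ may be taken to range over an open box in $\R^d$). Define a running state by the recursion $h_0 = x_0$ and $h_j = G(h_{j-1},v_j)$, where $G : \R^d\times\R^d \to \R^d$ is a fixed, cheaply ReLU-representable map that is \emph{injective in its second argument for every fixed first argument} (an additive or gated update such as $h_j = \sigma(A h_{j-1}) + v_j$ is a candidate). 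The learning target is the whole trajectory $F(x_0,c_1,\dots,c_L) = (h_1,\dots,h_L) \in \R^{Ld}$, i.e.\ predicting the sequence of states — a natural sequence-regression / language-modeling-style objective in which $c_j$ is the token at position $j$.

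\textbf{Upper bound (lookups at every layer).} The architecture that receives the slice $v_j$ of the embedding output at layer $j$ simply implements the recursion: layer $j$ takes $(h_{j-1},v_j)$, outputs $h_j$, and both passes it on and emits it as the $j$-th output block. Since $G$ is realizable (or approximable to arbitrary accuracy) by an $O(1)$-depth ReLU block of width $\mathrm{poly}(d)$, this network has hidden width $\mathrm{poly}(d)$ independent of $L$ and only $O(L\cdot\mathrm{poly}(d))$ parameters, and it computes $F$ exactly.

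\textbf{Lower bound (lookup only at the input).} Now consider any ReLU network that receives the full concatenation $v=(v_1,\dots,v_L)$ only at the input layer. Its activation after $i$ layers is a continuous piecewise-linear map $\Phi_i(v)$, and since the remaining layers see nothing but $\Phi_i(v)$, every output block $h_{i+1},\dots,h_L$ — produced at a layer beyond $i$ — must be a function of $\Phi_i(v)$ alone. Freeze $x_0$ and $v_1,\dots,v_i$ (hence the true $h_i$) and let $(v_{i+1},\dots,v_L)$ vary over an open box; by injectivity of $G$ in its second argument the map $(v_{i+1},\dots,v_L)\mapsto(h_{i+1},\dots,h_L)$ is injective, so $\Phi_i$ restricted to that box is an injection into $\R^{w_i}$, where $w_i$ is the width at layer $i$. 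Invariance of domain then forces $w_i \ge (L-i)d$, so this architecture has a width profile $\Omega((L-i)d)$ and $\Omega\!\big(\sum_i (L-i)^2 d^2\big) = \Omega(L^3 d^2)$ parameters — strictly worse than the per-layer architecture once $L \gg d$, which establishes the claimed separation.

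\textbf{Main obstacle.} The delicate point is upgrading the lower bound from ``exactly realizes $F$'' to ``approximately fits $F$'', together with the non-smooth, piecewise-linear structure: invariance of domain needs genuine injectivity on an open set, whereas an $\epsilon$-approximator need not be injective. One must therefore replace it with a quantitative estimate — e.g.\ bounding the covering number / volume of a width-$w_i$ piecewise-linear image and showing it cannot $\epsilon$-cover the $(L-i)d$-dimensional image of the injective trajectory map unless $w_i \gtrsim (L-i)d$, arguing piece-by-piece on the linear regions. A secondary, purely constructive obstacle is choosing the update $G$ so that it is simultaneously injective in $v_j$ (needed for the lower bound), representable in width $\mathrm{poly}(d)$ (needed for the upper bound), and non-degenerate in the sense of actually depending on all of $v_j$; the additive update above is the natural candidate to verify all three.
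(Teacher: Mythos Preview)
Your approach is genuinely different from the paper's, and more ambitious. The paper exhibits a much smaller instance: a single categorical feature $u$ alongside a continuous input $q$, with scalar target $\langle\Psi(u),\Phi(q)\rangle$ where $\Phi$ is some width-$d$ deep transform. Feeding $\Psi(u)$ directly to the top layer lets the body compute $\Phi(q)$ at width $d$ and finish with one dot product; feeding $\Psi(u)$ only at the bottom forces every intermediate layer to carry both the $d$-dimensional lookup and the $d$-dimensional running state for $\Phi(q)$, and an informal entropy/information-counting argument then says width $\geq 2d$. So the paper is content with a factor-of-two width gap obtained from a one-paragraph sketch, whereas you go after a $\mathrm{poly}(L)$ parameter gap via invariance of domain on a staged recursion with $L$ categorical features.

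What your route buys is a far sharper quantitative separation; what it costs is an unaddressed step beyond the approximation issue you already flag. Your $\Omega(L^3 d^2)$ count tacitly assumes the competing input-only network has depth $\Theta(L)$ and emits $h_j$ at (or after) its $j$-th layer --- that is where the index $i$ in ``width profile $\Omega((L-i)d)$'' comes from. But the lower bound must hold against \emph{every} input-only architecture, and nothing in the dimension argument constrains depth: a depth-$M$ network that outputs all of $(h_1,\dots,h_L)$ at its last layer only incurs, by your own invariance-of-domain step, width $\geq Ld$ at each of its $M$ hidden layers, with $M$ so far unconstrained. To finish you need a separate depth lower bound showing that computing the $L$-fold iterate $h_L$ at all requires depth $\Omega(L)$ for your chosen $G$ (plausible for a genuinely nonlinear gated $G$, but a real additional lemma). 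The paper avoids this entirely because its separation is purely a per-layer width statement and is indifferent to depth.
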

\begin{proof}[Proof Sketch]
The main idea is to consider two architectures that implement the embedding lookup in the two distinct ways and an input $(u, q)$ with ground truth score $\dotp{\Psi(u)}{\Phi(q)}$, where $\Psi(u)$ maps $u$ to a $d$-dimensional feature vector and $\Phi(q)$ is a non-linear transformation of $q$ that can be implemented by a deep network of width $d$. The first architecture combines the lookup $\Psi(u)$ with $q$ (by a weighted sum) and feeds into the network as input; the second architecture in addition feeds the embedding output of $u$ to all the layers of the network instead of only the lowest layer. The second architecture can store $\Psi(u)$ in the table and feed it directly to the output layer (which produces $\Phi(q)$) to obtain the result $\dotp{\Psi(u)}{\Phi(q)}$ using width $d$. On the other hand, for the first architecture the entropy of the information carried up the layers is at least $2d$ assuming $u$ and $q$ are random and not correlated, and so the width of the network needs to be $2d$.

\end{proof}
\section{External memory with Alternating Updates}
\label{sec:alternating-updates}
In this section, we introduce the method of \emph{Alternating Updates}, an approach to enable increased token dimension with little additional computation cost. 

\subsection{Background}
Instead of viewing external memory parameters as input-dependent function parameters (or ``experts”), we can view them as additional parameters for the input representation. To consume these additional parameters, we can project and add them to the original representation vector. Alternatively, we can use them to widen the representation vector, as we do in Alternating Updates. This ties well with the observation that language models benefit from wider model dimensions, for example, as model sizes scale up, model dimension grows from 512 (small) to 768 (base) and 1024 (large, 3B, and 11B) in T5 models \cite{raffel2020exploring}, and from 4096 (8B) to 8192 (64B) and 18432 (540B) in PaLM models \cite{chowdhery2022palm}.

As the model dimension increases, both representation dimension and transformer layer dimension increase. However, the two dimensions account for different capacities of the model: wider representations store more information about the input, while wider transformer layers give more processing power. They also differ a lot in computation cost: widening the representation increases computation minimally, while widening transformer layers quadratically increases computation cost. A natural question is how to incorporate wider representations while maintaining smaller transformer layers.

\subsection{A Predict-Compute-Correct Algorithm}
  We propose to keep a wide representation vector, perform computation with a sub-block, and estimate the updated representation using a Predict-Compute-Correct algorithm, as illustrated in Figure~\ref{fig:pcc}. Taking this view, external memory is now added by increasing the token embedding's dimensionality: suppose the original embedding dimension is $d \in \mathbb{N}$, it is now increased to $d + e \in \mathbb{N}$, which introduces $V e$ additional parameters, where $V$ is the vocabulary size for the tokens. While the $e$ can be any nonnegative integer, we first discuss our algorithm in the simpler case in which $e$ is a multiple  of $d$, i.e. $e = (K-1)d$, where $K \in \mathbb{N}$ and $K > 1$.
  
  \begin{figure}[ht]
  \centering
  \begin{subfigure}[b]{0.35\linewidth}
    \centering
    \includegraphics[width=\textwidth]{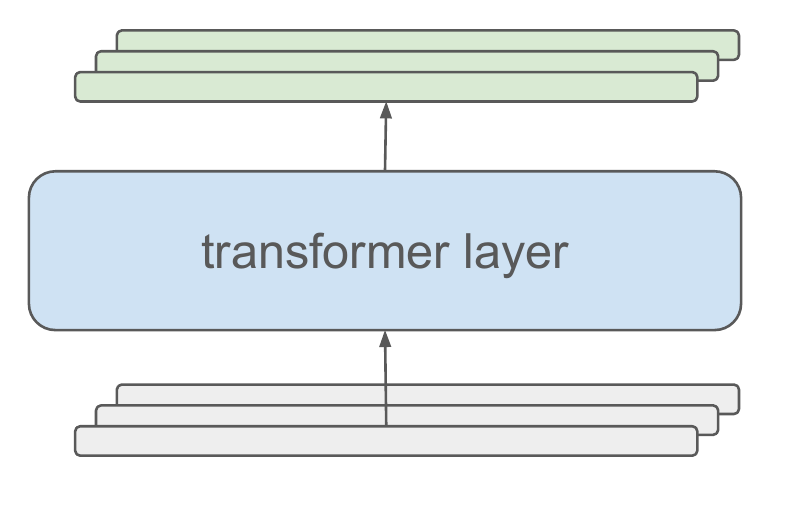}
    \caption{Wide transformer}
    \vspace{-0.05in}
    \label{fig:sparse_cube_a}
  \end{subfigure}
  \begin{subfigure}[b]{0.6\linewidth}
    \centering
    \includegraphics[width=\textwidth]{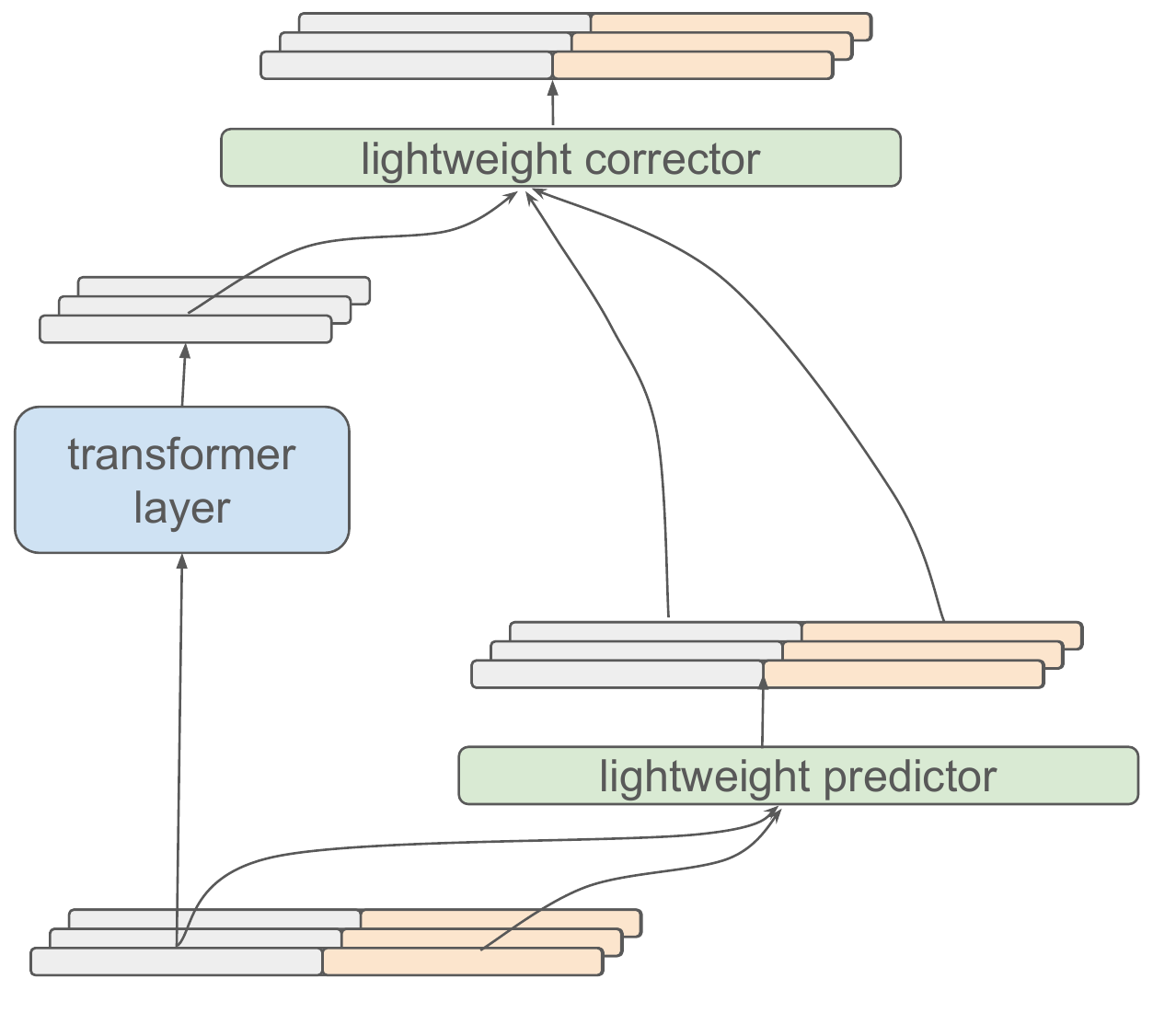}
    \caption{Predict-Compute-Correct}
    \vspace{-0.05in}
    \label{fig:sparse_cube_b}
  \end{subfigure}
  \caption{Updating a wide representation vector: (a) wide transformer layers scales quadratically with the representation dimension; (b) Predict-Compute-Correct algorithm uses a narrow transformer layer along with lightweight predictor and corrector to update a wide representation vector.}
   \label{fig:pcc}
   \vspace{-0.1in}
\end{figure}

More specifically, let the input embedding vector be $K d$ dimensional, where $K, d \in \mathbb{N}$. Our algorithm keeps the dimension of the representation vector at every layer to be $K d$ while uses layers of width $d$ to transform the representation vector. Denote the representation vector at layer $i$ by $x_i = \mathrm{concat}(x_i^1, x_i^2, ..., x_i^K)$, where $x^j_i \in \mathbb{R}^d, j = 1, 2, ..., K$ are contiguous sub-blocks of $x_i$ and $\mathrm{concat}$ is the concatenation operation. Denote layer $i$'s transformation function by $L_i: \mathbb{R}^d \rightarrow \mathbb{R}^d$. Representation vector $x_{i+1}$ at layer $i+1$ is obtained in three steps: 

\begin{enumerate}[noitemsep,nolistsep]
    \item \textbf{Prediction}: predict the representation vector at next layer with a trainable linear map $\hat{x}_{i+1} = P_i x_i$, where $P_i \in \mathbb{R}^{K d \times K d}$;
    \item \textbf{Computation}: select a sub-block $x_i^{j^*}$ and update this block with $L_i$:  $\tilde{x}_{i+1}^{j^*} = L_i (x_i^{j^*})$ (selection of $j^*$ is discussed in the next section); more than one sub-blocks can be selected if needed;
    \item \textbf{Correction}: correct the prediction with the computation result: $x_{i+1} = \hat{x}_{i+1} + G_i ( \tilde{x}_{i+1}^{j^*}  - \hat{x}_i^{j^*})$, where $G_i \in \mathbb{R}^{K d \times d}$ is a trainable matrix.
\end{enumerate}

When there is no ambiguity about the layer index, we drop the subscript $i$, and denote $x_{old} := x_i$ and $x_{new} := x_{i+1}$.  The three steps are summarized in Algorithm~\ref{alg:pcc}.

\begin{algorithm}
\SetAlgoLined
\KwIn{Representation vector $x_{old} = \mathrm{concat}(x_{old}^1, x_{old}^2, ..., x_{old}^K)$, where $x_{old}^j \in \mathbb{R}^d, j = 1, 2, ..., K$ are contiguous sub-blocks of $x_{old}$.}
\KwOut{Updated representation vector $x_{new}.$}

\textbf{Prediction}: predict the updated representation vector with a trainable linear map: $\hat{x} = P x_{old}$ , where $P \in \mathbb{R}^{K d \times K d}$ is a trainable matrix;

\textbf{Computation}: select a sub-block $x_{old}^{j^*}$ and update this block with $L$:  $\tilde{x}^{j^*} = L(x_{old}^{j^*})$;

\textbf{Correction}: correct the prediction with the computation result: $x_{new} = \hat{x} + G ( \tilde{x}^{j^*}  - \hat{x}^{j^*})$, where $G \in \mathbb{R}^{K d \times d}$ is a trainable matrix.

\caption{Predict-Compute-Correct algorithm}\label{alg:pcc}
\end{algorithm}

This Predict-Compute-Correct algorithm is inspired by the Kalman filter algorithm~\cite{kalman1960filtering}. Casted in the Kalman filtering framework, the prediction step utilizes a simple linear dynamic model,  the computation step is viewed as a form of ``measurement", and correction step performs a weighted sum of prediction and ``measurement" through the gain matrix $G$. Note the prediction and measurement noises which are important components in Kalman filter are not modeled in the above algorithm. Estimation of noises and covariance updates are left for future work.

To further reduce the computation cost of the prediction and correction steps, we impose a particular block matrix structure on $P$ and $G$. Let $P = (p_{i, j}I_{d \times d})_{i, j \in [K]}$, $G = (g_i I_{d \times d})_{i \in [K]}$,  where $p_{i, j}, g_{i} \in \mathbb{R}$ are scalars and $I_{d \times d} \in \mathbb{R}^{d \times d}$ is the identity matrix. Note this amounts to treating each sub-block $x^{i}$ as an atomic quantity. The simplified Predict-Compute-Correct algorithm is summarized in Algorithm~\ref{alg:simplified_pcc}.



\begin{algorithm}
\SetAlgoLined
\KwIn{Representation vector $x_{old} =  \mathrm{concat}(x_{old}^1, x_{old}^2, ..., x_{old}^K)$, where $x_{old}^j \in \mathbb{R}^d, j = 1, 2, ..., K$ are contiguous sub-blocks of $x_{old}$.}
\KwOut{Updated representation vector $x_{new}.$}

\textbf{Prediction}: predict the updated representation vector with a trainable linear map: $\hat{x}^i = \sum_{j = 1}^{K} p_{i, j} x_{old}^{j}$ for $i = 1, 2, ..., K$, where $p_{i, j} \in \mathbb{R}$ are trainable scalars;

\textbf{Computation}: select a sub-block $x_{old}^{j^*}$ and update this block with $L$:  $\tilde{x}^{j^*} = L(x_{old}^{j^*})$;

\textbf{Correction}: correct the prediction with the computation result: $x_{new}^{i} = \hat{x}^{i} + g_i ( \tilde{x}^{j^*}  - \hat{x}^{j^*})$ for $i = 1, 2, ..., K$, where $g_i \in \mathbb{R}$ are trainable scalars.

\caption{Simplified Predict-Compute-Correct algorithm}\label{alg:simplified_pcc}
\end{algorithm}

In the simplified algorithm, prediction and correction steps involve only vector addition and scalar-vector multiplication, which both incur $O(d)$ computation cost and much less than the $O(d^2)$ cost of the layer transformer $L$.

\subsection{Selection of sub-blocks}
The selection of sub-blocks for the computation step is not specified in Algorithm~\ref{alg:pcc} and \ref{alg:simplified_pcc}. We consider two simple, deterministic selection methods in this paper and leave more sophisticated methods for future work.

\begin{enumerate}[noitemsep,nolistsep]
    \item \textbf{Same}: choose the same sub-block for all the layers in a neural network;
    \item \textbf{Alternating}: for a sequence of layers, alternating through the sub-blocks, that is, if the sub-blocks are indexed with zero-based index, then sub-block $i$ mod $K$ is selected for the computation step for layer $i$. Algorithm \ref{alg:simplified_pcc} with alternating selection is referred to as \textbf{Alternating Updates}(\textbf{AltUp}) in the following sections.
\end{enumerate}

We compare the two selection methods empirically in Section~\ref{sec:altup_results} and found the ``alternating" method is better.

\subsection{Extension to non-integer multiples}
In the above description of the Predict-Compute-Correct algorithms, we assumed the augmented dimension $e$ is a multiple of original embedding dimension $d$. For the more general case when $e$ is not a multiple of $d$, we add a divide-and-project step before we apply Algorithm~\ref{alg:pcc} or Algorithm~\ref{alg:simplified_pcc}: choose an integer factor $(K-1)$ of $e$, divide the augmented vectors into $(K-1)$ sub-blocks, and project each sub-block to $d$ dimension. Here $K$ becomes another hyper-parameter of the algorithm.
\section{Results}
\label{sec:results}
\subsection{Setting}
We performed all of our experiments using T5-model architectures \cite{raffel2020exploring} of varying sizes (small, base, and large) which we pretrained on the C4 dataset for 500,000 \cb{500k or 100k?} steps with a batch size of $256$. The pretrained models were then finetuned on either the GLUE \cite{wang2018glue}, SuperGLUE \cite{wang2019superglue}, or SQuAD \cite{rajpurkar2016squad} benchmark tasks for a further 50,000 steps with a batch-size of $256$. The pretraining task is to predict corrupted text spans, and the finetuning tasks are re-casted into text generation tasks. We report both pretraining and finetuning metrics: for pretraining, we report span prediction accuracy on a hold-out validation set, and for finetuning, we follow the same recipe as the T5 models, see \cite{raffel2020exploring} for more details. The full experiment set-up is detailed in the appendix. 

\subsection{Memory consumption methods}\label{sec:altup_results}
In this section, we present empirical results on comparsion of different memory consumption methods, especially the Predict-Compute-Correct algorithm (Algorithm~\ref{alg:simplified_pcc}). In all the subsequent experiments, the augmented memory are implemented as additional embedding tables at the bottom layer of the model and token-ID lookup can be performed only once, which results in very small computation cost. We explore different memory consumption methods, model sizes, and memory sizes.

We first fix the augmented memory parameters to be one extra embedding table (corresonding to $K=2$ in Algorithm~\ref{alg:simplified_pcc}), lookup mechanism to be token-ID lookup, and compare different memory consumption methods. In Table~\ref{table:altup_ablation}, we compare the summation method (Sum) in which additional embedding vectors are added to the token representation vector,  Algorithm~\ref{alg:simplified_pcc} with same block selection (SameUp), and Algorithm~\ref{alg:simplified_pcc} with alternating block selection (AltUp), all on top of the T5 version 1.1 base model (B). We note all three methods bring improvements in both pretraining and finetuning, and AltUp is the most effective one. While pretraining accuracies for all three memory consumption methods are similar, differences in finetuning metrics are large, and Alternating Updates achieving roughly twice gains compared to the other two methods. Similar behaviors are observed for small and large sized T5 models, see appendix for details.

\begin{table}
    \renewcommand{\arraystretch}{1.15}
    \scalebox{0.85}{
    \begin{tabular}{@{}lcccc@{}}
    \toprule
    \multirow{2}{*}{\textbf{Model}} & {\textbf{Pretrain}} & {\textbf{Finetune}} & {\textbf{Finetune}} & {\textbf{Finetune}} \\
     & \textbf{accuracy} & \textbf{GLUE} & \textbf{SG} &\textbf{SQuAD (EM/F1)} \\
    \midrule
    B &    $66.42$ & $84.25$ & $73.56$ & $83.78/91.19$ \\
    B + Sum & $66.82$ & $84.85$ & $75.2$ & $84.36/91.36$\\
    B + SameUp &    $66.82$ & $84.06$ & $74.15$ & $84.41/91.76$ \\
    B + AltUp & $\mathbf{66.96}$ & $\mathbf{85.32}$ & $\mathbf{75.80}$ & $\mathbf{85.24/92.36}$\\
    \bottomrule
    \end{tabular}}
    \caption{Comparison of memory consumption methods: summation (Sum), Predict-Compute-Correct with ``same" block selection (SameUp), and Predict-Compute-Correct with ``alternating" block selection (AltUp).}
    \label{table:altup_ablation}
\end{table}

For the second set of experiments, we explore the alternating updates with increasing model sizes. We compare three model sizes with the T5 version 1.1 architecture: small (S), base (B) and large (L). The base and large models follow the same model configurations as in the T5 paper, while the small model is shallower than the T5 paper \cite{raffel2020exploring} to cover a larger range of model sizes ($4$ encoder/decoder layers instead of $8$ encoder/decoder layers). For models with alternating updates, we set $K=2$, corresponding to doubling the embedding dimension. Full details of the model configurations are available in Appendix. 

\begin{table}
    \renewcommand{\arraystretch}{1.15}
    \scalebox{0.85}{
    \begin{tabular}{@{}lcccc@{}}
    \toprule
    \multirow{2}{*}{\textbf{Model}} & {\textbf{Pretrain}} & {\textbf{Finetune}} & {\textbf{Finetune}} & {\textbf{Finetune}} \\
     & \textbf{accuracy} & \textbf{GLUE} & \textbf{SG} &\textbf{SQuAD (EM/F1)} \\
    \midrule
    S &    $61.21$ & $75.83$ & $59.28$ & $76.44/84.97$ \\
    S + AltUp & $\mathbf{61.86}$ & $\mathbf{76.82}$ & $\mathbf{59.60}$ & $\mathbf{77.51/85.79}$\\
    \hline
    B & $66.42$ & $84.25$ & $73.56$ & $83.78/91.19$\\
    B + AltUp & $\mathbf{66.96}$ & $\mathbf{85.32}$ & $\mathbf{75.80}$ & $\mathbf{85.24/92.36}$\\
    \hline
    L & $69.13$ & $87.23$ & $81.21$ & $86.77/93.56$ \\
    L + AltUp & $\mathbf{69.32}$ & $\mathbf{88.20}$ & $\mathbf{82.75}$ & $\mathbf{87.81/94.29}$\\
    \bottomrule
    \end{tabular}}
    \caption{T5 version 1.1. models augmented with Alternating Updates: both pretraining and finetuning metrics are improved. We observe diminishing return in pretraining accuracy gain, but no diminishing returns in finetuning metrics.}
    \label{table:altup_quality}
\end{table}

Table~\ref{table:altup_quality} shows pretraining and finetuning metrics comparsion of the baseline models and the corresponding models with Alternating Updates. Note gains in pretraining accuracies show diminishing returns when model sizes grows, gains in finetuning metrics doesn't seems to diminish. We plan to experiment with even larger models to see if this trend is still valid.

Table~\ref{table:altup_cost} documents the parameter count and training speed comparison. Note Alternating Updates increases the embedding parameters while leaving the non-embedding parameters roughly the same. Since the transformer computation are not changed by alternating updates, we also observe very small training speed impact.

\begin{table}
    \renewcommand{\arraystretch}{1.15}
    \scalebox{0.88}{
    \begin{tabular}{@{}lccc@{}}
    \toprule
    \textbf{Model} & \textbf{\# emb params} & \textbf{\# non-emb params} & \textbf{train speed} \\
    \midrule
    S &    3.29E+07	& 3.78E+07 & $166.1$   \\
    S + AltUp & 6.58E+07 & 3.99E+07 & $119.4$\\
    \hline
    B & 4.93E+07 & 1.98E+08 & $52.4$ \\
    B + AltUp & 9.87E+07 & 2.12E+08 & $42.3$ \\
    \hline
    L & 6.58E+07 & 7.17E+08 & $17.1$  \\
    L + AltUp & 1.32E+08 & 7.68E+08 & $14.4$ \\
    \bottomrule
    \end{tabular}}
    \caption{Model size and train speed comparison: T5.1.1 small(S), base(B) and large(L) models are compared. Embedding parameters include input embedding table parameters (shared between encoder and decoder) and output embedding table. Non-embedding parameters include all the transformer blocks. Train speed is measured by number of examples per second per core.}
    \label{table:altup_cost}
\end{table}

Finally, we present the model quality with different memory sizes. Table~\ref{table:altup_scaleup} contains model performances for alternating updated base sized models with $K=2$ and $4$. We observe a monotonic increasing trend for the pretraining accuracy as memory sizes increases. On finetuning tasks, we observe some tasks (SuperGLUE) continue to benefit from more memory, while other tasks (GLUE and SQuAD) don't. We hypothesize this might be attributed to the nature of the tasks: while some tasks requires more knowledge about the input and can be improved with a wider input representation, other tasks depend more function approximation capacity which is not increased with a wider input representation. We observe similar behaviors for small and large sized T5 models, see appendix for details.

\begin{table}
    \renewcommand{\arraystretch}{1.15}
    \scalebox{0.8}{
    \begin{tabular}{@{}lcccc@{}}
    \toprule
    \multirow{2}{*}{\textbf{Model}} & {\textbf{Pretrain}} & {\textbf{Finetune}} & {\textbf{Finetune}} & {\textbf{Finetune}} \\
     & \textbf{accuracy} & \textbf{GLUE} & \textbf{SG} &\textbf{SQuAD (EM/F1)} \\
    \midrule
    B &    $66.42$ & $84.25$ & $73.56$ & $83.78/91.19$ \\
    B + AltUp (K=2) & $66.96$ & $\mathbf{85.32}$ & $75.80$ & $\mathbf{85.24/92.36}$\\
    B + AltUp (K=4) & $\mathbf{67.18}$ & $84.95$ & $\mathbf{78.91}$ & $84.82/92.07$\\
    \bottomrule
    \end{tabular}}
    \caption{Performance with different memory sizes: T5 version 1.1 model augmented with larger embeddings, with $K = 2$ and $4$ corresponding to doubling and quadrupling the representation dimension.}
    \label{table:altup_scaleup}
\end{table}

\subsection{Varying table size and rank for Softmax Lookup}
Here we investigate the effect of varying table size (number of experts) and rank on the popular Softmax lookup mechanism that is used by state-of-the-art MoE models. We evaluate the performance of Softmax lookup in the partial expert setting on the performance of T5X Small pretraining with ranks $\{0, 4, 16, 64, 128\}$ and buckets (experts) $\{8, 32, 64, 128, 512\}$. The results are shown in Fig.~\ref{fig:t5x-small-softmax}. 

\begin{figure}[h!]
 \centering
 \includegraphics[width=0.5\textwidth]{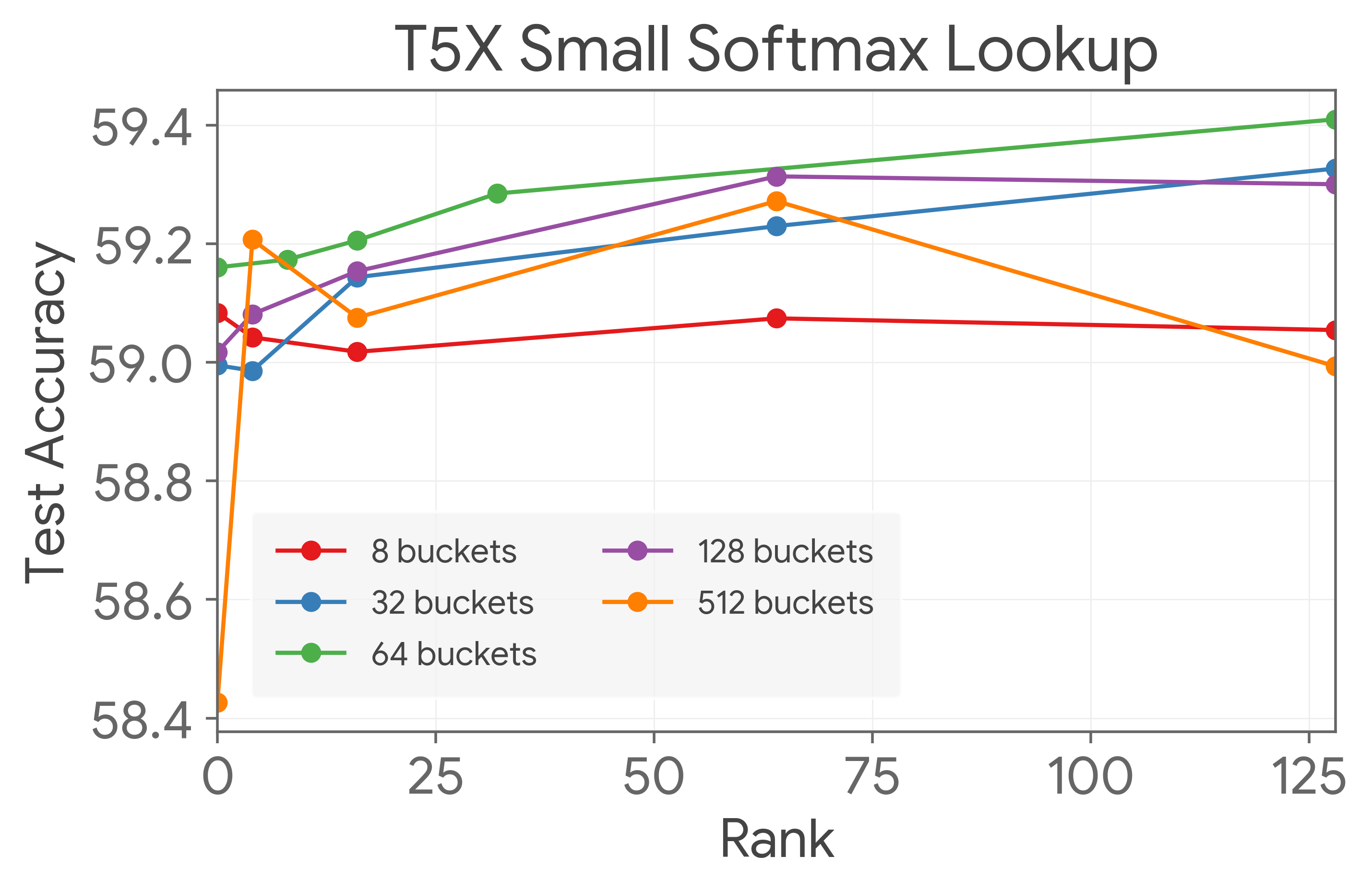}

 \caption{The performance of Softmax lookup with varying configurations of rank and buckets (number of experts) on the pretrain performance of T5 Small (baseline: $59.1\%$). Increasing the rank (width of each FF expert) generally increases performance, with rare exceptions. Increasing the number of buckets from 8 to 64 leads to monotonic increases performance, after which further increases lead to a monotonic performance degradation.}
 \label{fig:t5x-small-softmax}
\end{figure}

We observe that generally, increasing the width (rank) of the partial expert leads to increased performance for virtually all table sizes, with the notable exception of the largest table size (512). Interestingly, increasing the number of buckets (table size) does not lead to strict increases in performance. In fact, the best performing configuration uses the intermediate table size of $64$ buckets with the maximum rank tested $128$. A stronger trend also holds: increasing the buckets until 64 leads to monotonic increases in performance (see red, blue, and green curves of Fig.~\ref{fig:t5x-small-softmax}), after which we see a monotonic decrease in performance with higher number of buckets (see purple and orange curves). This is in agreement with previous observations of softmax-style MoE routing~\cite{fedus2022review}, where increasing the number of experts beyond a certain point was found to degrade performance.

\subsection{Comparisons of Lookup Functions}
We now consider evaluating the performance of the various lookup functions subject to a constraint on the number of additional parameters introduced to the model. In particular, rank 0 Token-ID introduces roughly $2^{15}$ additional parameters, so we experiment with various combinations of rank ($\{0, 2, 4, \ldots, 128, 256\}$) and the number of buckets ($\{2^3, 2^4, \ldots, 2^{15}\}$) for Softmax and LSH lookup that satisfy the criteria of adding roughly $2^{15}$ (32,128) parameters\footnote{The number of additional parameters is computed as $ \max\{2 \, \mathrm{rank}, 1\} * \mathrm{buckets}$. We refer the reader to the supplementary for details on the computation.} to the model.
 
 \begin{table}
    \renewcommand{\arraystretch}{1.15}
    \centering
    \begin{tabular}{@{}lccc@{}}
    \toprule
    \textbf{Lookup Function} & \textbf{T5 Small} & \textbf{T5 Base} \\
    \midrule
    Baseline &    59.10	& 63.35 \\
    \hline
    Token-ID (rank $0$, $\approx 2^{15}$ buckets) & \textbf{59.44} & \textbf{63.73} \\
    Softmax (rank $256$, $128$ buckets) & 59.35 & 63.49 \\
    LSH (rank $128$, $128$ buckets) & 59.12 & 63.47 \\
    \bottomrule
    \end{tabular}
    \caption{Pretrain accuracy of models with various lookup functions with the same number of additional parameters on the small and base T5 models. The value for the best-performing (rank, number of buckets) configuration subject to the constraint of adding less than $2^{15}$ parameters is reported for each lookup method. Token-ID performs the best and adds the smallest amount of additional compute relative to Softmax and LSH lookup.}
    \label{table:lookup-comparison}
\end{table}

 Table~\ref{table:lookup-comparison} depicts the results of the evaluations on T5 Small and Base models. We see that Token-ID achieves the highest pretrain accuracy on both T5 models and significantly outperforms the baselines. Softmax and LSH lookup come second and third place, respectively. Note that even though the configuration of rank 0 and $2^{15}$ buckets was evaluated for both Softmax and LSH lookups, they performed poorly in comparison to Token-ID lookup. This result precisely aligns with the statement of Theorem~\ref{thm:lookup-efficiency} which states that for a fixed size table, Token-ID is more efficient than Softmax, which is in turn more efficient than (hyperplane) LSH lookup.

\section{Conclusion}
\label{sec:conclusion}


In this paper, we study various lookup (routing) functions for sparsely activated memory modules and memory consumption methods. We empirically evaluate different lookup strategies, noting in particular the effectiveness of Token-ID lookup in the large-number-of-experts setting. We provide theoretical insights to support this experimental observation by studying the different lookup functions through the lens of Locality Sensitive Hashing. In addition, we introduce a novel method, \emph{Alternating Updates} to increase representation width with little additional computation cost. Specifically, \emph{Alternating Updates} utilizes lightweight prediction and correction steps to update a wider representation vector without increasing the transformer layer’s computation cost. As a result, we achieve strong performance improvements on language modeling and language understanding benchmarks. We envision that theoretical insights on lookup functions and the Alternating Updates algorithm can serve as valuable components for designing high-performing memory augmented models.

\bibliography{main}

\begin{thebibliography}{}

\bibitem[Andoni and Indyk, 2008]{andoni2008near}
Andoni, A. and Indyk, P. (2008).
\newblock Near-optimal hashing algorithms for approximate nearest neighbor in
  high dimensions.
\newblock {\em Communications of the ACM}, 51(1):117--122.

\bibitem[Andoni et~al., 2015]{andoni2015practical}
Andoni, A., Indyk, P., Laarhoven, T., Razenshteyn, I., and Schmidt, L. (2015).
\newblock Practical and optimal lsh for angular distance.
\newblock In {\em Advances in neural information processing systems}, pages
  1225--1233.

\bibitem[Andoni et~al., 2014]{andoni2014beyond}
Andoni, A., Indyk, P., Nguyen, H.~L., and Razenshteyn, I. (2014).
\newblock Beyond locality-sensitive hashing.
\newblock In {\em Proceedings of the twenty-fifth annual ACM-SIAM symposium on
  Discrete algorithms}, pages 1018--1028. SIAM.

\bibitem[Andoni and Razenshteyn, 2015]{andoni2015optimal}
Andoni, A. and Razenshteyn, I. (2015).
\newblock Optimal data-dependent hashing for approximate near neighbors.
\newblock In {\em Proceedings of the forty-seventh annual ACM symposium on
  Theory of computing}, pages 793--801.

\bibitem[Baykal et~al., 2022]{baykal2022theoretical}
Baykal, C., Dikkala, N., Panigrahy, R., Rashtchian, C., and Wang, X. (2022).
\newblock A theoretical view on sparsely activated networks.
\newblock {\em arXiv preprint arXiv:2208.04461}.

\bibitem[Borgeaud et~al., 2022]{borgeaud2022improving}
Borgeaud, S., Mensch, A., Hoffmann, J., Cai, T., Rutherford, E., Millican, K.,
  Van Den~Driessche, G.~B., Lespiau, J.-B., Damoc, B., Clark, A., et~al.
  (2022).
\newblock Improving language models by retrieving from trillions of tokens.
\newblock In {\em International Conference on Machine Learning}, pages
  2206--2240. PMLR.

\bibitem[Broder et~al., 1998]{broder1998min}
Broder, A.~Z., Charikar, M., Frieze, A.~M., and Mitzenmacher, M. (1998).
\newblock Min-wise independent permutations.
\newblock In {\em Proceedings of the thirtieth annual ACM symposium on Theory
  of computing}, pages 327--336.

\bibitem[Chen et~al., 2022]{chen2022towards}
Chen, Z., Deng, Y., Wu, Y., Gu, Q., and Li, Y. (2022).
\newblock Towards understanding mixture of experts in deep learning.
\newblock {\em arXiv preprint arXiv:2208.02813}.

\bibitem[Chowdhery et~al., 2022]{chowdhery2022palm}
Chowdhery, A., Narang, S., Devlin, J., Bosma, M., Mishra, G., Roberts, A.,
  Barham, P., Chung, H.~W., Sutton, C., Gehrmann, S., et~al. (2022).
\newblock Palm: Scaling language modeling with pathways.
\newblock {\em arXiv preprint arXiv:2204.02311}.

\bibitem[Clark et~al., 2022]{clark2022unified}
Clark, A., de~Las~Casas, D., Guy, A., Mensch, A., Paganini, M., Hoffmann, J.,
  Damoc, B., Hechtman, B., Cai, T., Borgeaud, S., et~al. (2022).
\newblock Unified scaling laws for routed language models.
\newblock In {\em International Conference on Machine Learning}, pages
  4057--4086. PMLR.

\bibitem[Datar et~al., 2004]{datar2004locality}
Datar, M., Immorlica, N., Indyk, P., and Mirrokni, V.~S. (2004).
\newblock Locality-sensitive hashing scheme based on p-stable distributions.
\newblock In {\em Proceedings of the twentieth annual symposium on
  Computational geometry}, pages 253--262.

\bibitem[Fan et~al., 2021]{fan2021beyond}
Fan, A., Bhosale, S., Schwenk, H., Ma, Z., El-Kishky, A., Goyal, S., Baines,
  M., Celebi, O., Wenzek, G., Chaudhary, V., et~al. (2021).
\newblock Beyond english-centric multilingual machine translation.
\newblock {\em J. Mach. Learn. Res.}, 22(107):1--48.

\bibitem[Fedus et~al., 2022]{fedus2022review}
Fedus, W., Dean, J., and Zoph, B. (2022).
\newblock A review of sparse expert models in deep learning.
\newblock {\em arXiv preprint arXiv:2209.01667}.

\bibitem[Fedus et~al., 2021]{fedus2021switch}
Fedus, W., Zoph, B., and Shazeer, N. (2021).
\newblock Switch transformers: Scaling to trillion parameter models with simple
  and efficient sparsity.

\bibitem[Gionis et~al., 1999]{gionis1999similarity}
Gionis, A., Indyk, P., Motwani, R., et~al. (1999).
\newblock Similarity search in high dimensions via hashing.
\newblock In {\em Vldb}, volume~99, pages 518--529.

\bibitem[Graves et~al., 2014]{graves2014neural}
Graves, A., Wayne, G., and Danihelka, I. (2014).
\newblock Neural turing machines.
\newblock {\em arXiv preprint arXiv:1410.5401}.

\bibitem[Graves et~al., 2016]{graves2016hybrid}
Graves, A., Wayne, G., Reynolds, M., Harley, T., Danihelka, I.,
  Grabska-Barwi{\'n}ska, A., Colmenarejo, S.~G., Grefenstette, E., Ramalho, T.,
  Agapiou, J., et~al. (2016).
\newblock Hybrid computing using a neural network with dynamic external memory.
\newblock {\em Nature}, 538(7626):471--476.

\bibitem[Guu et~al., 2020]{guu2020retrieval}
Guu, K., Lee, K., Tung, Z., Pasupat, P., and Chang, M. (2020).
\newblock Retrieval augmented language model pre-training.
\newblock In {\em International Conference on Machine Learning}, pages
  3929--3938. PMLR.

\bibitem[Hazimeh et~al., 2021]{hazimeh2021dselect}
Hazimeh, H., Zhao, Z., Chowdhery, A., Sathiamoorthy, M., Chen, Y., Mazumder,
  R., Hong, L., and Chi, E. (2021).
\newblock Dselect-k: Differentiable selection in the mixture of experts with
  applications to multi-task learning.
\newblock {\em Advances in Neural Information Processing Systems},
  34:29335--29347.

\bibitem[Hernandez et~al., 2021]{hernandez2021scaling}
Hernandez, D., Kaplan, J., Henighan, T., and McCandlish, S. (2021).
\newblock Scaling laws for transfer.
\newblock {\em arXiv preprint arXiv:2102.01293}.

\bibitem[Hoffmann et~al., 2022]{hoffmann2022training}
Hoffmann, J., Borgeaud, S., Mensch, A., Buchatskaya, E., Cai, T., Rutherford,
  E., Casas, D. d.~L., Hendricks, L.~A., Welbl, J., Clark, A., et~al. (2022).
\newblock Training compute-optimal large language models.
\newblock {\em arXiv preprint arXiv:2203.15556}.

\bibitem[Jacobs et~al., 1991]{jacobs1991adaptive}
Jacobs, R.~A., Jordan, M.~I., Nowlan, S.~J., and Hinton, G.~E. (1991).
\newblock Adaptive mixtures of local experts.
\newblock {\em Neural computation}, 3(1):79--87.

\bibitem[Kalman, 1960]{kalman1960filtering}
Kalman, R.~E. (1960).
\newblock {A New Approach to Linear Filtering and Prediction Problems}.
\newblock {\em Journal of Basic Engineering}, 82(1):35--45.

\bibitem[Kaplan et~al., 2020]{kaplan2020scaling}
Kaplan, J., McCandlish, S., Henighan, T., Brown, T.~B., Chess, B., Child, R.,
  Gray, S., Radford, A., Wu, J., and Amodei, D. (2020).
\newblock Scaling laws for neural language models.
\newblock {\em arXiv preprint arXiv:2001.08361}.

\bibitem[Lample et~al., 2019]{lample2019large}
Lample, G., Sablayrolles, A., Ranzato, M., Denoyer, L., and J{\'e}gou, H.
  (2019).
\newblock Large memory layers with product keys.
\newblock {\em Advances in Neural Information Processing Systems}, 32.

\bibitem[LeCun et~al., 2012]{lecun2012efficient}
LeCun, Y.~A., Bottou, L., Orr, G.~B., and M{\"u}ller, K.-R. (2012).
\newblock Efficient backprop.
\newblock In {\em Neural networks: Tricks of the trade}, pages 9--48. Springer.

\bibitem[Lepikhin et~al., 2020]{lepikhin2020gshard}
Lepikhin, D., Lee, H., Xu, Y., Chen, D., Firat, O., Huang, Y., Krikun, M.,
  Shazeer, N., and Chen, Z. (2020).
\newblock Gshard: Scaling giant models with conditional computation and
  automatic sharding.
\newblock {\em arXiv preprint arXiv:2006.16668}.

\bibitem[Lewis et~al., 2021]{lewis2021base}
Lewis, M., Bhosale, S., Dettmers, T., Goyal, N., and Zettlemoyer, L. (2021).
\newblock Base layers: Simplifying training of large, sparse models.
\newblock In {\em International Conference on Machine Learning}, pages
  6265--6274. PMLR.

\bibitem[Liebenwein et~al., 2021]{liebenwein2021lost}
Liebenwein, L., Baykal, C., Carter, B., Gifford, D., and Rus, D. (2021).
\newblock Lost in pruning: The effects of pruning neural networks beyond test
  accuracy.
\newblock {\em Proceedings of Machine Learning and Systems}, 3:93--138.

\bibitem[Panigrahy et~al., 2021]{panigrahy2021sketch}
Panigrahy, R., Wang, X., and Zaheer, M. (2021).
\newblock Sketch based memory for neural networks.
\newblock In {\em International Conference on Artificial Intelligence and
  Statistics}, pages 3169--3177. PMLR.

\bibitem[Patterson et~al., 2021]{patterson2021carbon}
Patterson, D., Gonzalez, J., Le, Q., Liang, C., Munguia, L.-M., Rothchild, D.,
  So, D., Texier, M., and Dean, J. (2021).
\newblock Carbon emissions and large neural network training.
\newblock {\em arXiv preprint arXiv:2104.10350}.

\bibitem[Raffel et~al., 2020]{raffel2020exploring}
Raffel, C., Shazeer, N., Roberts, A., Lee, K., Narang, S., Matena, M., Zhou,
  Y., Li, W., Liu, P.~J., et~al. (2020).
\newblock Exploring the limits of transfer learning with a unified text-to-text
  transformer.
\newblock {\em J. Mach. Learn. Res.}, 21(140):1--67.

\bibitem[Rajpurkar et~al., 2016]{rajpurkar2016squad}
Rajpurkar, P., Zhang, J., Lopyrev, K., and Liang, P. (2016).
\newblock Squad: 100,000+ questions for machine comprehension of text.
\newblock {\em arXiv preprint arXiv:1606.05250}.

\bibitem[Riquelme et~al., 2021]{riquelme2021scaling}
Riquelme, C., Puigcerver, J., Mustafa, B., Neumann, M., Jenatton, R.,
  Susano~Pinto, A., Keysers, D., and Houlsby, N. (2021).
\newblock Scaling vision with sparse mixture of experts.
\newblock {\em Advances in Neural Information Processing Systems},
  34:8583--8595.

\bibitem[Roberts et~al., 2022]{roberts2022t5x}
Roberts, A., Chung, H.~W., Levskaya, A., Mishra, G., Bradbury, J., Andor, D.,
  Narang, S., Lester, B., Gaffney, C., Mohiuddin, A., Hawthorne, C., Lewkowycz,
  A., Salcianu, A., van Zee, M., Austin, J., Goodman, S., Soares, L.~B., Hu,
  H., Tsvyashchenko, S., Chowdhery, A., Bastings, J., Bulian, J., Garcia, X.,
  Ni, J., Chen, A., Kenealy, K., Clark, J.~H., Lee, S., Garrette, D.,
  Lee-Thorp, J., Raffel, C., Shazeer, N., Ritter, M., Bosma, M., Passos, A.,
  Maitin-Shepard, J., Fiedel, N., Omernick, M., Saeta, B., Sepassi, R.,
  Spiridonov, A., Newlan, J., and Gesmundo, A. (2022).
\newblock Scaling up models and data with $\texttt{t5x}$ and $\texttt{seqio}$.
\newblock {\em arXiv preprint arXiv:2203.17189}.

\bibitem[Roller et~al., 2021]{roller2021hash}
Roller, S., Sukhbaatar, S., Weston, J., et~al. (2021).
\newblock Hash layers for large sparse models.
\newblock {\em Advances in Neural Information Processing Systems},
  34:17555--17566.

\bibitem[Shazeer et~al., 2017]{shazeer2017outrageously}
Shazeer, N., Mirhoseini, A., Maziarz, K., Davis, A., Le, Q., Hinton, G., and
  Dean, J. (2017).
\newblock Outrageously large neural networks: The sparsely-gated
  mixture-of-experts layer.
\newblock {\em arXiv preprint arXiv:1701.06538}.

\bibitem[Shazeer and Stern, 2018]{shazeer2018adafactor}
Shazeer, N. and Stern, M. (2018).
\newblock Adafactor: Adaptive learning rates with sublinear memory cost.
\newblock In {\em International Conference on Machine Learning}, pages
  4596--4604. PMLR.

\bibitem[Tay et~al., 2020]{tay2020efficient}
Tay, Y., Dehghani, M., Bahri, D., and Metzler, D. (2020).
\newblock Efficient transformers: A survey.
\newblock {\em ACM Computing Surveys (CSUR)}.

\bibitem[Vaswani et~al., 2017]{vaswani2017attention}
Vaswani, A., Shazeer, N., Parmar, N., Uszkoreit, J., Jones, L., Gomez, A.~N.,
  Kaiser, L.~u., and Polosukhin, I. (2017).
\newblock Attention is all you need.
\newblock In {\em Advances in Neural Information Processing Systems},
  volume~30.

\bibitem[Wang et~al., 2019]{wang2019superglue}
Wang, A., Pruksachatkun, Y., Nangia, N., Singh, A., Michael, J., Hill, F.,
  Levy, O., and Bowman, S. (2019).
\newblock Superglue: A stickier benchmark for general-purpose language
  understanding systems.
\newblock {\em Advances in neural information processing systems}, 32.

\bibitem[Wang et~al., 2018]{wang2018glue}
Wang, A., Singh, A., Michael, J., Hill, F., Levy, O., and Bowman, S.~R. (2018).
\newblock Glue: A multi-task benchmark and analysis platform for natural
  language understanding.
\newblock {\em arXiv preprint arXiv:1804.07461}.

\bibitem[Wu et~al., 2022a]{wu2022residual}
Wu, L., Liu, M., Chen, Y., Chen, D., Dai, X., and Yuan, L. (2022a).
\newblock Residual mixture of experts.
\newblock {\em arXiv preprint arXiv:2204.09636}.

\bibitem[Wu et~al., 2020]{wu2020memformer}
Wu, Q., Lan, Z., Gu, J., and Yu, Z. (2020).
\newblock Memformer: The memory-augmented transformer.
\newblock {\em arXiv preprint arXiv:2010.06891}.

\bibitem[Wu et~al., 2022b]{wu2022memorizing}
Wu, Y., Rabe, M.~N., Hutchins, D., and Szegedy, C. (2022b).
\newblock Memorizing transformers.
\newblock {\em arXiv preprint arXiv:2203.08913}.

\bibitem[Yu et~al., 2022]{yu2022coca}
Yu, J., Wang, Z., Vasudevan, V., Yeung, L., Seyedhosseini, M., and Wu, Y.
  (2022).
\newblock Coca: Contrastive captioners are image-text foundation models.
\newblock {\em arXiv preprint arXiv:2205.01917}.

\bibitem[Yuksel et~al., 2012]{yuksel2012twenty}
Yuksel, S.~E., Wilson, J.~N., and Gader, P.~D. (2012).
\newblock Twenty years of mixture of experts.
\newblock {\em IEEE transactions on neural networks and learning systems},
  23(8):1177--1193.

\bibitem[Zoph et~al., 2022]{zoph2022designing}
Zoph, B., Bello, I., Kumar, S.~a., Du, N., Huang, Y., Dean, J., Shazeer, N.,
  and Fedus, W. (2022).
\newblock Designing effective sparse expert models.
\newblock {\em arXiv preprint arXiv:2202.08906}.

\end{thebibliography}
\bibliographystyle{apalike}

\onecolumn


\section*{Supplementary Material for \textit{The Power of External Memory in Increasing Predictive Model Capacity}}
In this supplementary, we present the full proofs of the theoretical statements in the main paper, details of the experimental evaluations, and additional empirical results.

\section*{Proofs of the theoretical statements in Sec.~\ref{sec:analysis}}

Recall from Sec.~\ref{sec:analysis} that we consider two sentences $s_1, s_2$ of the same length $l$ that have $f$ fraction of wordpieces in common.
\lookupefficiency
\begin{proof}
For the first statement, we observe that the buckets in Spherical LSH correspond to the Voronoi regions formed by the randomly chosen set of points in $\mathbb{R}^d$. If we consider the Softmax routing matrix $W$ from Sec.~\ref{sec:routing-functions} to be random (see~\cite{andoni2015optimal} for details) and the rows to be of unit norm, then the experts' dot row-wise product with the top-1 routing will correspond to picking the expert whose routing vector is closest in angle to the input vector~\cite{andoni2015practical}. Min-hash involves hashing the universe of elements from which the sets are constructed randomly into some interval in the real line, and hashing a set to the element that has the smallest value on the real line. Thus each element can be viewed as a bucket. If we use the Jaccard simlarity measure for comparing sets (for two sets $A,B$ it is given by $\mathrm{sim}(A,B) = |A \cap B|/|A \cup B|$) then this hash function ensures that two sets $A,B$ will hash to the same bucket with probability equal to $\mathrm{sim}(A,B)$. Note that this same property holds for Token-ID lookup (Sec.~\ref{sec:routing-functions}) as each token at any layer is hashed to an expert specific to that token. In this sense, Min-hash LSH corresponds to Token-ID lookup.

For the second statement, we consider the setting of the theorem where we have two sentences $s_1,s_2$ of equal length and a fraction $f$ of wordpieces in common. Let us evaluate the probability of routing a certain token to the expert at some intermediate layer. Note that at intermediate layers we can assume that sufficient mixing has happened between the token due to self-attention modules. For simplicity, we can view this mixing as averaging the value of all the token embeddings. Since the initial wordpiece embeddings are random, after averaging the dot product between the two averages will be $f$ and so the distance between them will be $\mathcal O (\sqrt{1- f})$. On the other hand, if we take two sentences with no tokens in common the distance between them will be $\mathcal O(1)$.

An appropriate implementation of hyperplane LSH~\cite{datar2004locality} has the property $c = \mathcal O(1 / \sqrt{1 - f})$ and $\rho = \mathcal O(1/c) = \mathcal O (\sqrt{1 - f})$. With $n$ experts (buckets), this yields a collision probability of $n^{- \mathcal O(\sqrt{1 - f})}$ for the set of experts corresponding to the two similar sentences. Spherical LSH has the improved property that $\rho = \mathcal O (1 / c^2)$, which yields $ n^{-\mathcal O(1 - f)}$ for the collision probability based on the analysis above. For Min-hash, we know from the above that the probability that two sets $A$ and $B$ hash to the same bucket is equal to $\mathrm{sim}(A,B)$. This corresponds to the fraction of overlapping wordpieces in two sentences $s_1$ and $s_1$, hence the fraction of experts for which there is a collision is $f$. This proves the second statement of the theorem.

The third statement follows immediately from the second one, where the most efficient lookup is considered to be the one that has the highest probability of collision of two similar sentences. Hence, for large $n$ and small $f$, the order of efficacy is
$$
f \ge n^{- \mathcal O(1 - f)} \ge  n^{- \mathcal O(\sqrt{1 - f})}.
$$
This concludes the proof of the theorem.
\end{proof}

\embeddinglookup
\begin{proof}
We consider two architectures that implement the embedding lookup in the two distinct ways and an input $(u, q)$ with ground truth score $\dotp{\Psi(u)}{\Phi(q)}$, where $\Psi(u)$ maps $u$ to a $d$-dimensional feature vector and $\Phi(q)$ is a non-linear transformation of $q$ that can be implemented by a deep network of width $d$. The first architecture combines the lookup $\Psi(u)$ with $q$ (by a weighted sum) and feeds into the network as input; the second architecture in addition feeds the embedding output of $u$ to all the layers of the network instead of only the lowest layer. The second architecture can store $\Psi(u)$ in the table and feed it directly to the output layer (which produces $\Phi(q)$) to obtain the result $\dotp{\Psi(u)}{\Phi(q)}$ using width $d$. On the other hand, for the first architecture the entropy of the information carried up the layers is at least $2d$ assuming $u$ and $q$ are random and not correlated, and so the width of the network needs to be $2d$.
\end{proof}

\section*{Additional experiment setup details}
\label{sec:exp_setup}

\subsection*{Details of the T5 experiments}

We evaluated our techniques on the T5 language models \cite{raffel2020exploring}. Specifically, we use the T5 version 1.1 models with gated GELU feedforward network and pre layernorm. The models are implemented on top of the T5X \cite{roberts2022t5x} code base. During pretraining, we use 256 batch size, Adafactor optimizer \cite{shazeer2018adafactor} with base learning rate $1.0$ and reciprocal square-root decay with $10000$ warmup steps, and zero dropout. During finetuning, we use 256 batch size, Adafactor optimizer with constant learning rate of $0.001$ and $0.1$ dropout. Unless explicited mentioned, we pretrain for $500,000$ steps and finetune for $50,000$ steps.

\subsection*{Details of the partial expert computation}
For all experiments, we used single partial expert lookup, i.e., $k = 1$ in SMoE terminology and added the the output of the partial expert to the output of the main expert. Throughout the paper, buckets are synonymous with partial experts. In our experiments we defined each partial expert as a FF network composed of two matrices $U, V \in \mathbb{R}^{d_\mathrm{in} \times \mathrm{rank}}$, where $d_\mathrm{in}$ is the embedding dimension of the input to the partial expert and $\mathrm{rank}$ is a configurable parameter that controls the width of the expert. The output for a $d_\mathrm{in}$-dimensional input $x$ is computed as $V \phi(U^T x)$ where $\phi(\cdot)$ is the nonlinearity. In this paper, we used the ReLU function for $\phi$, i.e., $\phi(x) = \max \{0, x\}$ entrywise. Note that adding $\mathrm{buckets}$ experts, each with rank $\mathrm{rank}$ adds a total of $2 \max  \{\mathrm{rank}, 1\} * \mathrm{buckets} * d_\mathrm{in}$ parameters to the network\footnote{We ignore the $d_\mathrm{in}$ factor when comparing various routing functions since this is universally present regardless of the partial experts configuration.}. The matrices were initialized according to LeCun normal initialization~\cite{lecun2012efficient}.

\subsection*{Softmax Routing}
For softmax routing, we used the simplified implementation of the top-1 routing of~\cite{fedus2021switch}. For sake of fair comparisons with other lookup methods that do not require load balancing, we did not consider an explicit technique for load balancing such as load balancing loss~\cite{fedus2021switch} or router z loss~\cite{zoph2022designing} due to the additional hyperparameters that they introduce. We use multiplicative jitter noise sampled from a uniform distribution over $[1 - \varepsilon, 1 + \varepsilon]^{d_\mathrm{in}}$~\cite{zoph2022designing,fedus2021switch} with $\varepsilon = 0.01$. The router matrix $W$ was initialized by drawing from a zero mean Normal distribution with standard deviation $2 \times 10^{-2}$.

\section*{Additional experiment results}
\label{sec:exp_results}

\subsubsection*{Additional experiments for memory consumption methods}
We provide additional experiments for memory consumption methods comparison. In section~\ref{sec:altup_results}, we presented the comparison on the T5 version 1.1 base size model. Here we present the results for T5 version 1.1 small and large size models in Table~\ref{table:altup_ablation_sl}. We observe the Prediction-Compute-Correct algorithm with \emph{same} and \emph{alternating} block selection methods outperforms the summation method. For the small models, \emph{same} block selection method performs better in most tasks, while for large models, \emph{alternating} block selection method performs better in most tasks.

\begin{table}
    \renewcommand{\arraystretch}{1.15}
    \scalebox{0.85}{
    \begin{tabular}{@{}lcccc@{}}
    \toprule
    \multirow{2}{*}{\textbf{Model}} & {\textbf{Pretrain}} & {\textbf{Finetune}} & {\textbf{Finetune}} & {\textbf{Finetune}} \\
     & \textbf{accuracy} & \textbf{GLUE} & \textbf{SG} &\textbf{SQuAD (EM/F1)} \\
    \midrule
    S &    $61.21$ & $75.83$ & $59.52$ & $76.44/84.97$ \\
    S + Sum & $61.67$ & $77.54$ & $59.63$ & $75.06/83.82$\\
    S + SameUp &    $\mathbf{61.91}$ & $\mathbf{77.75}$ & $\mathbf{60.81}$ & $76.85/85.51$ \\
    S + AltUp & $61.86$ & $76.82$ & $59.60$ & $\mathbf{77.51/85.79}$\\
    \bottomrule
    \end{tabular}}
    \scalebox{0.85}{
    \begin{tabular}{@{}lcccc@{}}
    \toprule
    \multirow{2}{*}{\textbf{Model}} & {\textbf{Pretrain}} & {\textbf{Finetune}} & {\textbf{Finetune}} & {\textbf{Finetune}} \\
     & \textbf{accuracy} & \textbf{GLUE} & \textbf{SG} &\textbf{SQuAD (EM/F1)} \\
    \midrule
    L &    $69.13$ & $87.23$ & $81.21$ & $86.77/93.56$ \\
    L + Sum & $69.09$ & $86.18$ & $78.93$ & $86.19/93.08$\\
    L + SameUp &    $\mathbf{69.45}$ & $87.95$ & $82.72$ & $\mathbf{87.65}/ 94.13$ \\
    L + AltUp & $69.32$ & $\mathbf{88.20}$ & $\mathbf{82.75}$ & $87.58/\mathbf{94.27}$\\
    \bottomrule
    \end{tabular}}
    \caption{Comparison of memory consumption methods. T5 version 1.1 small (S) and large (L) models, with different memory consumption methods: summation (Sum), Predict-Compute-Correct with ``same" block selection (SameUp), and Predict-Compute-Correct with ``alternating" block selection (AltUp).}
    \label{table:altup_ablation_sl}
\end{table}

\subsubsection*{Additional experiments for different memory sizes}

We report the performance of T5 version 1.1 small and large models with different memory sizes in Table~\ref{table:altup_scaleup_sl}. We observe that for the large models, the trend is similar to the T5 base sized model, i.e. more memory improves both pretrain and finetune quality; while for the small model, more memory improves pretrain quality, but the finetune quality doesn't improve, likely due to overfitting from the additional parameters. 

\begin{table}
    \renewcommand{\arraystretch}{1.15}
    \scalebox{0.8}{
    \begin{tabular}{@{}lcccc@{}}
    \toprule
    \multirow{2}{*}{\textbf{Model}} & {\textbf{Pretrain}} & {\textbf{Finetune}} & {\textbf{Finetune}} & {\textbf{Finetune}} \\
     & \textbf{accuracy} & \textbf{GLUE} & \textbf{SG} &\textbf{SQuAD (EM/F1)} \\
    \midrule
    S &   $61.21$ & $75.83$ & $59.52$ & $76.44/84.97$ \\
    S + AltUp (K=2) & $61.86$ & $\mathbf{76.82}$ & $\mathbf{59.60}$ & $\mathbf{77.51/85.79}$\\
    S + AltUp (K=4) & $\mathbf{62.00}$ & $76.40$ & $59.54$ & $76.38/84.86$\\
    \bottomrule
    \end{tabular}}
    \scalebox{0.8}{
    \begin{tabular}{@{}lcccc@{}}
    \toprule
    \multirow{2}{*}{\textbf{Model}} & {\textbf{Pretrain}} & {\textbf{Finetune}} & {\textbf{Finetune}} & {\textbf{Finetune}} \\
     & \textbf{accuracy} & \textbf{GLUE} & \textbf{SG} &\textbf{SQuAD (EM/F1)} \\
    \midrule
    L &   $69.13$ & $87.23$ & $81.21$ & $86.77/93.56$  \\
    L + AltUp (K=2) & $69.32$ & $88.20$ & $82.75$ & $\mathbf{87.81/94.29}$\\
    L + AltUp (K=4) & $\mathbf{69.55}$ & $\mathbf{88.42}$ & $\mathbf{82.94}$ & $87.59/94.02$\\
    \bottomrule
    \end{tabular}}
    \caption{Performance with different memory sizes: T5 version 1.1 small (S) and large (L) models augmented with larger embeddings, with $K = 2$ and $4$ corresponding to doubling and quadrupling the representation dimension.}
    \label{table:altup_scaleup_sl}
\end{table}

\subsubsection*{Softmax Lookup Sweeps}
In this section we plot the results of additional sweeps of the $\mathrm{bucket}$ and $\mathrm{rank}$ parameters for the T5 Base and T5 Large models to supplement the sweep on T5 small (Fig.~\ref{fig:t5x-small-softmax}). The results of the sweeps are shown in Fig.~\ref{fig:softmax-base-large-comp}.

\begin{figure*}[ht!]
  \centering
  \begin{minipage}[t]{0.49\textwidth}
  \centering
 \includegraphics[width=1\textwidth]{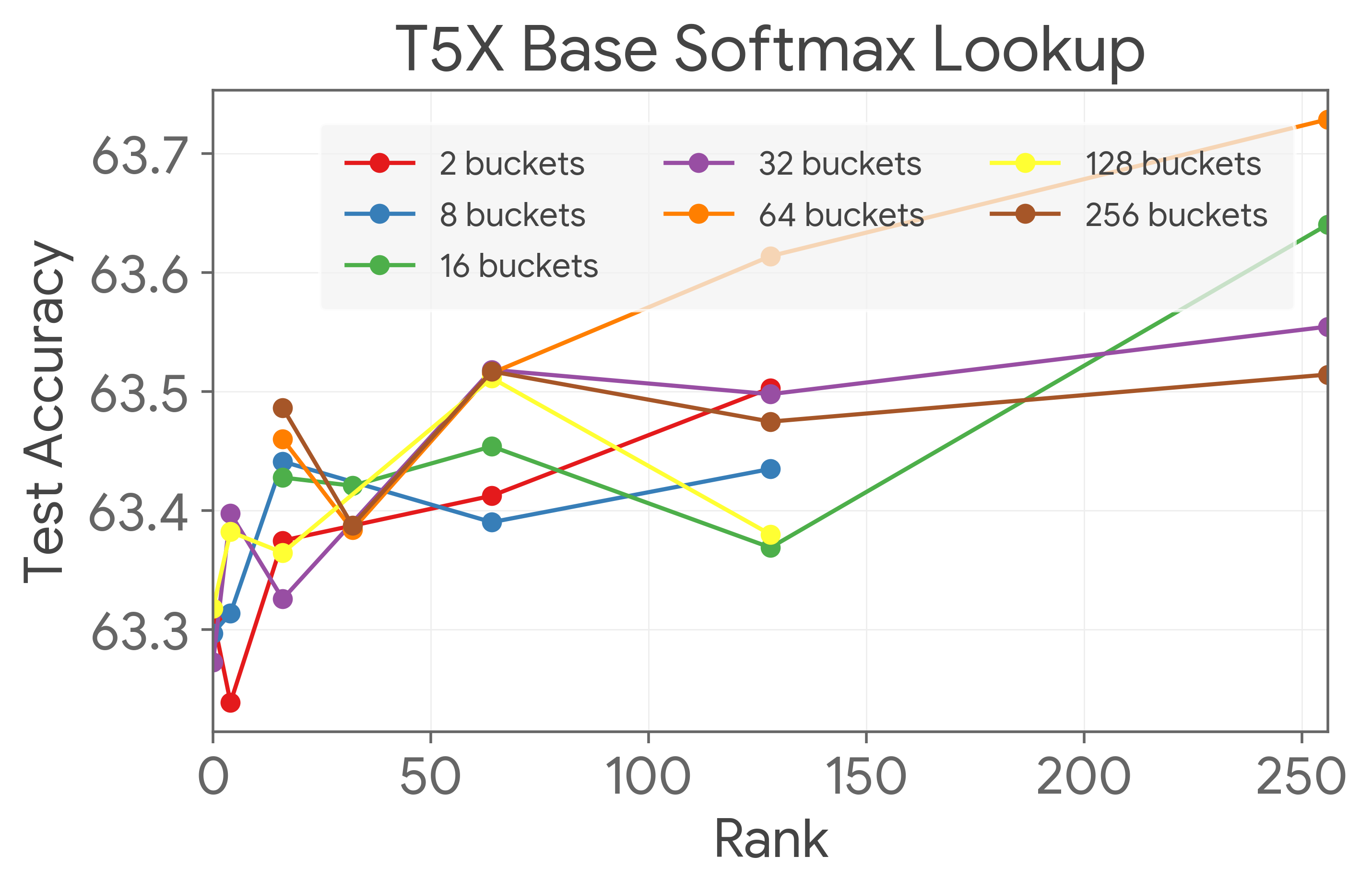}
  \end{minipage}%
  \hfill
  \begin{minipage}[t]{0.49\textwidth}
  \centering
   \includegraphics[width=1\textwidth]{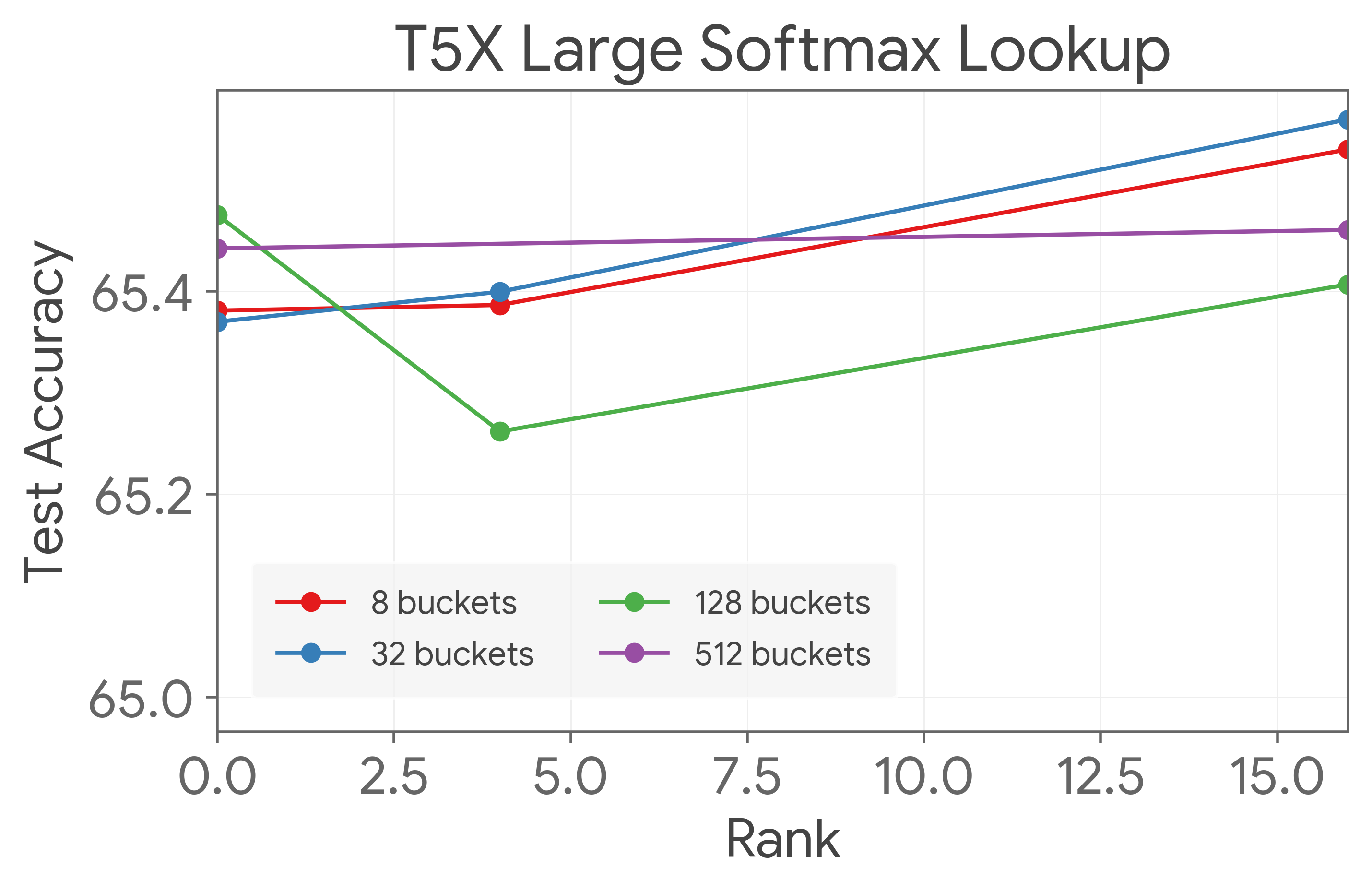}
  \end{minipage}%

\caption{The performance of Softmax lookup with varying configurations of rank and buckets (number of experts) on the pretrain performance (at 100,000 steps) of T5 Base (left, baseline: $63.35\%$) and T5 Large (right, baseline: $65.72\%$). Increasing the rank generally increases performance, with rare exceptions. Certain configurations with high rank and a large number of buckets are not shown due to memory constraints.}
\label{fig:softmax-base-large-comp}
\end{figure*}

\subsection*{Comparisons of techniques}
Table~\ref{table:lookup-comparison-detail} synthesizes the performance of the techniques presented in the paper on T5 Small, Base, and Large models. 

 \begin{table}
    \renewcommand{\arraystretch}{1.5}
    \centering
    \begin{tabular}{|l|c|c|c|c|}
    \hline
    \textbf{Technique} & \textbf{T5 Small} & \textbf{T5 Base} & \textbf{T5 Large} \\
    \hline
    Baseline &    59.10	& 63.35 & 65.58 \\
    \hline
    Alternating Updates ($K = 2$) & \textbf{59.67} & \textbf{63.97} & \textbf{66.13} \\
    \hline
    Token-ID  & \makecell{\vspace{1px} 59.44\\\small{(rank $0$, $\approx 2^{15}$ buckets)}} & \makecell{63.76\\\small{(rank $5$, $\approx 2^{15}$ buckets)}} & \makecell{\vspace{1px} 65.49 \\\small{(rank $0$, $\approx 2^{15}$ buckets)}}  \\
    \hline
    Softmax & \makecell{59.42\\\small{(rank $128$, $64$ buckets)}} & \makecell{63.62\\\small{(rank $128$, $64$ buckets)}} & \makecell{65.61\\\small{(rank $16$, $32$ buckets)}} \\
    \hline
    LSH & \makecell{59.12\\\small{(rank $128$, $128$ buckets)}} & \makecell{63.47\\\small{(rank $10$, $1024$ buckets)}} & \makecell{65.60\\ \small{(rank $10$, $1024$ buckets)}} \\
    \hline
    \end{tabular}
    \caption{Pretrain accuracy at 100,000 steps of T5 models augmented with alternating updates (see Sec.~\ref{sec:alternating-updates}) and various lookup functions with the best configuration subject to an upper bound of $\mathrm{rank} \leq 128$ and $\mathrm{buckets} \leq 2^{15}$. The value for the best-performing (rank, number of buckets) configuration is reported in the case of routing functions.}
    \label{table:lookup-comparison-detail}
\end{table}

\end{document}